
\documentclass[letterpaper, 10 pt, conference]{ieeeconf}  

\IEEEoverridecommandlockouts                              

\overrideIEEEmargins                                      


\usepackage[T1]{fontenc}
\usepackage{graphicx} 
\usepackage{amsmath} 
\usepackage{amssymb}  
\usepackage{mathtools}
\usepackage[hyphens]{url}
\usepackage{xcolor}

\usepackage[ruled,vlined,linesnumbered]{algorithm2e}
\usepackage{algorithmic,algorithm2e,float}

\SetKw{Continue}{continue}


\usepackage{amsthm}
\theoremstyle{definition}
\newtheorem{problem}{Problem}
\newtheorem{thm}{Theorem}
\newtheorem{assumption}{Assumption}
\newtheorem{definition}{Definition}

\newtheorem{exmp}{Example}
\newtheorem*{rem}{Remark}

\newtheorem{lem}{Lemma}

\newcommand\oprocendsymbol{\hbox{$\bullet$}}
\newcommand\oprocend{\relax\ifmmode\else\unskip\hfill\fi\oprocendsymbol}

\newcommand{\dc}{DiskCover}
\newcommand{\hc}{HexCover}
\newcommand{\dct}{DiskCoverTour}
\newcommand{\hct}{HexCoverTour}

\title{\LARGE \bf Approximation Algorithms for Robot Tours in Random Fields with Guaranteed Estimation Accuracy}

\author{Shamak Dutta, Nils Wilde, Pratap Tokekar, and Stephen L. Smith
\thanks{This research is supported in part by the Natural Sciences and Engineering
Research Council of Canada (NSERC) and by Nutrien Ltd.}
\thanks{S.\ Dutta and S.\ L.\ Smith are with the Department of Electrical and Computer Engineering,
        University of Waterloo, Canada 
        \{{\tt\small stephen.smith, shamak.dutta\}@uwaterloo.ca}.  N.\ Wilde is with the Cognitive Robotics Department, Delft University of Technology, Netherlands ({\tt\small N.Wilde@tudelft.nl}). P.\ Tokekar is with the Department of Computer Science, University of Maryland ({\tt\small tokekar@umd.edu}).}%
}
\begin{document}
\maketitle
\thispagestyle{empty}
\pagestyle{empty}

\begin{abstract}
    We study the sample placement and shortest tour problem for robots tasked with mapping environmental phenomena modeled as stationary random fields. The objective is to minimize the resources used (samples or tour length) while guaranteeing estimation accuracy. We give approximation algorithms for both problems in convex environments. These improve previously known results, both in terms of theoretical guarantees and in simulations. In addition, we disprove an existing claim in the literature on a lower bound for a solution to the sample placement problem.
\end{abstract}

\section{Introduction} \label{section:introduction}
In environmental monitoring, robots are tasked with mapping physical phenomena with desired accuracy while minimizing the resources used. For example, farmers want accurate nutrient maps of their fields to direct fertilizer usage which will subsequently maximize their crop yield. Unfortunately, soil sampling is time-consuming and expensive \cite{webster2007geostatistics}. It is advantageous to minimize the number of samples taken to build a nutrient map. In post-disaster environments, there is a need to map the spatial distribution of radioactivity using unmanned aerial systems in a short period of time \cite{connor2020radiological}. This involves a combination of deciding the measurement locations as well as planning a short path through the environment with the goal of building an accurate map.

We consider the setting where the phenomena is modeled as a stationary random field over a compact two-dimensional environment. The covariance structure of the stationary field is given by the commonly used squared-exponential function. This probabilistic model is popular in spatial statistics \cite{webster2007geostatistics,cressie2015statistics} since it allows for predictions at unobserved locations along with an estimate on the prediction error. We focus on two problems: sample placement and shortest tours in random fields. Both problems deal with the same constraint: ensure the estimation error at each point in the environment is within a desired tolerance. In sample placement, we seek to find a subset of measurement locations of minimum size that satisfies this constraint. In a similar vein, the shortest tour problem aims to compute the minimum length tour whose vertices satisfy the constraint. Unfortunately, sampling and tour planning in random fields are NP-hard \cite{das2008algorithms,singh2009efficient} and thus, our objective is to develop approximation algorithms.

\begin{figure}
    \centering
    \includegraphics[width=0.98\linewidth]{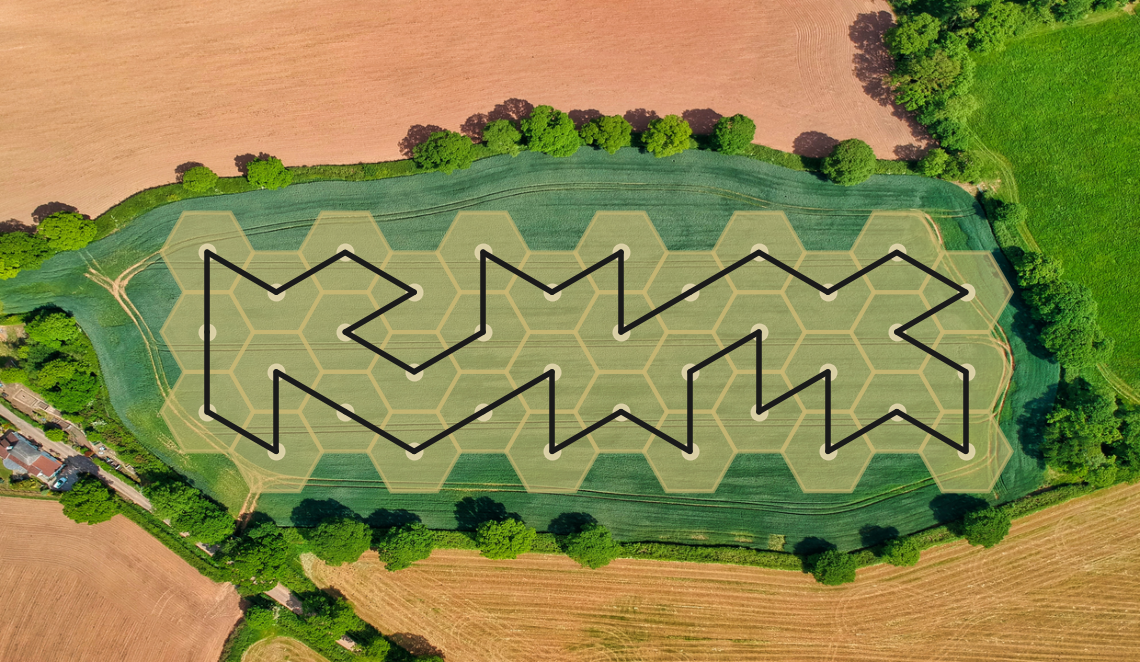}
    \caption{An example of a solution to the sample placement and shortest tour problem. The vertex set (white circles) of the tour (black) guarantees accurate predictions in the hexagons (yellow), thereby covering the field.}
    \label{fig:intro_fig}
\end{figure}
Our approach, \textsc{\hc}, to solving the sample placement problem involves tiling the environment with hexagons whose centers serve as measurement locations. The edge length of the hexagons are selected in a way such that the resulting measurement set satisfies the error tolerance. For the shortest tour problem, our proposed method, \textsc{\hct}, uses the measurement set obtained by \textsc{\hc} to plan an approximately optimal tour. An example of a sample placement and tour are shown in Figure~\ref{fig:intro_fig}.

\emph{Related Work:}
In geostatistics, estimation of random fields is done via the theory of kriging interpolation \cite{cressie2015statistics,stein1999interpolation}. With knowledge of the mean and covariance, estimation at unobserved locations is carried out using linear least-squares estimation. Since the expected error is independent of the observations, objectives based on the error can be computed a priori. This is equivalent to Gaussian Process regression in machine learning \cite{williams2006gaussian}. Given a set of measurements, we use kriging theory in this paper to make predictions using linear estimators and quantify the error at unobserved locations.

The sample placement and shortest tour problem have been considered previously \cite{suryan2020learning}. The analysis of the approximation algorithms relies on a claim of a lower bound on any solution to the sample placement problem. We provide a counterexample to disprove this claim. However, by using the effective range of the covariance function \cite{webster2007geostatistics,krause2008near,cao2013multi}, the provided approximation ratios still hold.

In sensor/sample placement, the objective is to distribute a set of sensors to maximize the sensing quality over a random field. Sensing quality is measured by the entropy and mutual information. Though computing the optimal subset in this case is NP-hard, greedy algorithms perform well \cite{ko1995exact} with guarantees stemming from submodularity of the objective \cite{krause2008near}. However, guarantees for mutual information do not hold for the estimation error which is the constraint in our formulation. While algorithms have been developed for sensor coverage \cite{feng2021sensor} in deterministic environments, the methods are not applicable in random fields.

In informative path planning, robots aim to maximize the information gained about the environment subject to constraints on path length. One approach to tackle continuous environments is by discretization where optimal solutions have been computed via branch-and-bound \cite{binney2012branch} and mixed-integer programming \cite{dutta2022informative}. When the objective is submodular, recursive greedy approaches \cite{chekuri2005recursive,binney2010informative,binney2013optimizing} provide near-optimal solutions. However, the estimation error is not a supermodular set function \cite{das2008algorithms,dutta2022}. A related kriging variance minimization problem is considered in \cite{xiao2022nonmyopic} but the approach does not provide approximation guarantees. Methods that operate in continuous space have also been developed \cite{hollinger2013sampling,meera2019obstacle} with guarantees on classification error \cite{tokekar2016sensor}. Certain scenarios require adaptive approaches for informative planning to adjust for outliers \cite{chen2022informative} or to the non-uniform spatial correlations \cite{zhu2021online,popovic2017multiresolution} or account for robot pose uncertainty \cite{popovic2020informative}.  Unfortunately, none of these algorithms provide guarantees for the estimation error constraint considered in this paper.

\emph{Contributions:}
The contributions of this work are twofold. First, we disprove a claim in the literature \cite{suryan2020learning} on a necessary condition for the sample placement problem. Second, we give approximation algorithms for the sample placement and shortest tour problems in convex environments, improving previous theoretical guarantees. In simulations, we demonstrate the effectiveness of our approach over prior work.

\section{Preliminaries} \label{section:preliminaries}
Consider a metric space $(\mathcal{X}, \rho)$ where $\rho: \mathcal{X} \times \mathcal{X} \rightarrow \mathbb{R}_{\geq 0}$ is a metric. Let $A \subset \mathcal{X}$ be a compact set and let $B(x, r) := \{ y \in \mathcal{X}: \rho(x, y) \leq r\}$ denote a closed ball. 

\subsection{Covering \& Packing Number} \label{subsec:covering_packing}

\begin{definition}[$r$-packing]
The set $\{x_1, \ldots, x_n\} \subset A$ is an $r$-packing of $A$ if the sets $\{ {B}(x_i, r/2): i \in [n] \} $ are pairwise disjoint i.e., for any $i \neq j$, $\rho(x_i, x_j) > r$. 
\end{definition}

\begin{definition}[$r$-covering]
The set $\{x_1, \ldots, x_n\} \subset A$ is an $r$-covering of $A$ if $A \subset \bigcup_{i=1}^n B(x_i, r)$ i.e., $\forall x \in A, \exists i$ such that $\rho(x_i, x) \leq r$. 
\end{definition}

\subsection{The Metric Traveling Salesman Problem} \label{subsec:metric_tsp}
The input to the metric traveling salesman problem (TSP) is a complete graph $G = (V, E)$, where the vertex set $V$ lies in a metric space $(\mathcal{X}, \rho)$. A cycle is a sequence of distinct vertices $\langle v_1, \ldots, v_n, v_1 \rangle$ and a tour $T$ is a cycle that visits each vertex in the graph exactly once. The cost of a tour, denoted by $\text{len}(T)$, is the sum of the costs of its associated edge set i.e., $\text{len}(T) := \sum_{i=1}^{n-1} \rho(v_i, v_{i+1}) + \rho(v_n, v_1)$. The objective of the TSP is then to find a tour of minimum cost. With slight abuse of notation, we let $T$ refer to the tour as well the set of vertices visited on the tour. 
\section{Problem Formulation} \label{section:problem_formulation}

We follow a similar problem setup to \cite{dutta2022,dutta2022informative,suryan2020learning}. We provide a summarized description here.
Consider a zero-mean Gaussian random field $\{ Z(x): x \in D\}$ indexed on a convex and compact environment $D \subset \mathbb{R}^2$ with a non-empty interior. Let $\rho: \mathbb{R}^2 \times \mathbb{R}^2 \rightarrow \mathbb{R}_{\geq 0}$ be the Euclidean metric and denote the squared exponential covariance function associated with the field by $\phi: \mathbb{R} \rightarrow \mathbb{R}_{\geq 0}$ i.e., for any $x, y \in D$,
\begin{equation}
    \text{Cov}\left( Z(x), Z(y) \right) = \phi(\rho(x, y)) = \sigma_0^2 e^{-\frac{\rho^2(x,y)}{2L^2}},
\end{equation}
where $\sigma_0 > 0, L > 0$ are known hyperparameters, typically determined by a pilot study.

It is common to assume that points that are sufficiently distant are uncorrelated since the covariance decays exponentially with the distance \cite{krause2008near,cao2013multi}. In spatial statistics, this is known as the \textit{effective range} of the covariance function \cite{webster2007geostatistics} and is usually taken to be the distance at which $\phi(\cdot)$ equals $5\%$ of the a priori field variance $\sigma_0^2$.
\begin{assumption}[Effective Range, \cite{webster2007geostatistics}] \label{assumption:rmax}
The distance $r_{\max}$ beyond which any two points are assumed to be uncorrelated is given by
\begin{equation}
    \phi(r_{\max}) =  0.05 \times \sigma_0^2 \implies r_{\max} = \sqrt{6} L.
\end{equation}
\end{assumption}

The measurements of the random field are corrupted with zero-mean Gaussian noise with variance $\sigma^2 > 0$. Given a measurement set $S = \{x_1, \ldots, x_n\} \subset D$, the optimal linear least-squares estimate of $Z(x)$ is a linear combination of the observations taken at $S$. The estimation error $f_x(S)$ is:
\begin{equation} \label{equation:estimation_error}
    \begin{split}
         f_x(S) &:= \phi(0) - \boldsymbol{b}_{x,S}' C_S^{-1} \boldsymbol{b}_{x,S},
    \end{split}
\end{equation}
where the following notation is used: 
\begin{equation}
    \begin{split}
        \boldsymbol{b}_{x,S} &:= [\phi(\rho(x,x_1)), \ldots, \phi(\rho(x,x_n))] \in \mathbb{R}^n\\
        C_S &:= \mathbb{E} \left[ \boldsymbol{Z}_S \boldsymbol{Z}_S^T\right] + \sigma^2 I_n \in \mathbb{R}^{n \times n}\\
        &= \begin{bmatrix}
        \phi(0) & \dots & \phi(\rho(x, x_1))\\
        \vdots & \ddots & \vdots\\
        \phi(\rho(x_n,x_1) ) & \dots & \phi(0)
        \end{bmatrix} + \sigma^2 I_n\\
    \end{split}
\end{equation}

We are ready to state the two problems we wish to solve.
\begin{problem}[Sample Placement] \label{prob:minimum_measurement_set}
Given an environment $D \subset \mathbb{R}^2$ and $\Delta < \sigma_0^2$, find a measurement set $S \subset D$ of minimum cardinality such that the resulting estimation error for each $x \in D$ is within $\Delta$ i.e.,
\begin{equation} \label{problem_formulation}
    \begin{aligned}
        & \underset{S \subset D}{\text{minimize}}
        & & |S|\\
        & \text{subject to}
        & & f_x(S) \leq \Delta, \; \forall x \in D.
    \end{aligned}
\end{equation}
\end{problem}

\begin{problem}[Shortest Tour] \label{prob:minimum_length_tour}
Given an environment $D \subset \mathbb{R}^2$ and $\Delta < \sigma_0^2$, find a tour $T$ of minimum length such that the vertex set of the tour achieves an estimation error within $\Delta$ for each point $x \in D$ i.e.,
\begin{equation} \label{problem_formulation_tours}
    \begin{aligned}
        & \underset{T}{\text{minimize}}
        & & \text{len}({T})\\
        & \text{subject to}
        & & f_x(T) \leq \Delta, \; \forall x \in D.
    \end{aligned}
\end{equation}
\end{problem}

Throughout this paper, we will denote the optimal solutions to the sample placement and shortest tour problem  
by $S^*$ and $T^*$ respectively.

\section{Solution Approach} \label{section:approach}

We begin with a discussion of a previous approach \cite{suryan2020learning} to solving the sample placement problem. In Section \ref{subsec:correction}, we disprove a claimed lower bound on the optimal solution. In light of this, we also observed the proposed algorithm in \cite{suryan2020learning} is sample inefficient. This motivates the development of new algorithms: \textsc{\hc} for the sample placement problem (Section \ref{subsec:alg_prob1}) and \textsc{\hct} for the shortest tour problem (Section \ref{subsec:alg_prob2}).

\subsection{Disproving a Necessary Condition} \label{subsec:correction}

The approach for sample placement proposed in~\cite{suryan2020learning} is called \textsc{\dc} and uses two steps. First, it covers the environment with large circles of a certain radius $r_{\max}$. Second, it then covers these circles with smaller circles to obtain a feasible solution which yields an approximation algorithm. The analysis does not use Assumption \ref{assumption:rmax}. Instead, it involves a claim on a necessary condition for the problem. We provide a counterexample to disprove this claim.

It would be useful to obtain a distance $r_{\max}$ such that if a prediction location does not have a sample within $r_{\max}$, then the constraint in \eqref{problem_formulation} cannot be satisfied. Then, a packing of circles of radius $r_{\max}$ provides a lower bound for the sample placement problem. This is the idea behind the necessary condition in \cite{suryan2020learning}. We state the lemma below.

\begin{lem}[Lemma 1, \cite{suryan2020learning}]
For any test location $x$, if the nearest measurement location is at a distance $r_{\max}$ away, and
\begin{equation} \label{ce:lemma1}
    r_{\max} > L \sqrt{- \log \left(1 - \frac{\Delta}{\sigma_0^2}\right)},
\end{equation}
then it is not possible to bring down the MSE (estimation error) below $\Delta$ at $x$.
\end{lem}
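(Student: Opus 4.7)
The plan is to prove the contrapositive: assuming the estimation error at $x$ falls at or below $\Delta$, derive an upper bound on the distance from $x$ to its nearest sample. First I would handle the cleanest case, $|S| = 1$. If $S = \{x_1\}$ with $\rho(x, x_1) = d$, then $\boldsymbol{b}_{x,S}$ is a scalar equal to $\phi(d) = \sigma_0^2 e^{-d^2/(2L^2)}$ and $C_S = \sigma_0^2 + \sigma^2$, so
\begin{equation*}
    f_x(\{x_1\}) \;=\; \sigma_0^2 - \frac{\sigma_0^4 e^{-d^2/L^2}}{\sigma_0^2 + \sigma^2}.
\end{equation*}
In the noise-free limit $\sigma \to 0$, imposing $f_x(\{x_1\}) \le \Delta$ rearranges to exactly $d \le L \sqrt{-\log(1 - \Delta/\sigma_0^2)}$, matching the claimed threshold; noise only increases the left-hand side, so the single-sample version works.

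The second step would be to extend this to arbitrary measurement sets $S$ whose nearest point to $x$ is at distance $d$. The natural approach is a monotonicity argument: show that among all such $S$, the error $f_x(S)$ is lower-bounded by the single-sample value above. Candidate tools include a Schur-complement manipulation of the quadratic form $\boldsymbol{b}_{x,S}' C_S^{-1} \boldsymbol{b}_{x,S}$, the conditional-variance interpretation $f_x(S) = \mathrm{Var}(Z(x) \mid \{Z(x_i) + \varepsilon_i\}_{i \in S})$, or a projection argument in the reproducing kernel Hilbert space associated with $\phi$.

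This monotonicity step is where I expect the plan to collapse, and it is the main obstacle. The conditional variance is non-increasing in $S$: augmenting $S$ by any additional observation with non-zero covariance to $x$ strictly decreases the error. The single-sample formula is therefore an \emph{upper} bound on $f_x(S)$, not a lower bound. Fixing only the nearest distance at $r_{\max}$ leaves room for additional samples at distances $\ge r_{\max}$ to pool their partial information about $Z(x)$ and pull the error below $\Delta$. Since the plan offers no way to close this gap, I would not expect the lemma as stated to be provable, consistent with the paper's announcement that it supplies an explicit counterexample.
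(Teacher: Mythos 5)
Your conclusion is the right one, and it matches the paper's stance: the paper does not prove this lemma at all --- it is quoted from \cite{suryan2020learning} precisely in order to be refuted, and your diagnosis of \emph{why} it fails is the same as the paper's (the original argument treats the configuration with all samples stacked at the nearest location as if it lower-bounded the error, whereas monotonicity of $f_x(\cdot)$ runs the other way, so that configuration only gives an upper bound and says nothing about what spread-out distant samples can achieve). Your single-sample computation and the observation that the stated threshold is exactly the noise-free single-sample threshold are also correct.

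The one substantive gap is that your argument stops at ``pooling distant samples \emph{could} pull the error below $\Delta$,'' which by itself does not disprove the lemma: it is conceivable a priori that the error from samples all at distance $\geq r_{\max}$ is bounded below by some quantity still exceeding $\Delta$, so falsity has to be exhibited, not just made plausible. The paper closes this by a concrete counterexample: with $\sigma = \sigma_0 = L = 1$, $\Delta = 0.5$, the right-hand side of the inequality is $\approx 0.8326$, and taking $r_{\max} = 0.93255461$ with the four symmetric samples $S = \{(\pm r_{\max},0),(0,\pm r_{\max})\}$ gives $f_x(S) \approx 0.4438 < \Delta$ at $x = (0,0)$, even though the nearest sample is farther than the claimed threshold. (The computation is a short symmetric $4\times 4$ linear-algebra check, since $C_S^{-1}\boldsymbol{b}_{x,S}$ is a constant vector by symmetry.) Adding such an explicit instance is what turns your correct intuition into an actual refutation; as written, your proposal identifies the obstruction but leaves the disproof incomplete.
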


The proof constructs a bound for the estimation error by placing all samples at the nearest location to the test location $x$. This is shown in the left plot in Figure~\ref{fig:counterexample}. While this seems intuitive, this does not give a valid lower bound on the error. We construct a measurement set, as shown in the right plot in Figure~\ref{fig:counterexample}, that obtains a lower estimation error.
\begin{exmp}
Consider the following environment setup: $\sigma = 1, \sigma_0 = 1, L = 1, x = (0,0), \Delta = 0.5$. Then, the RHS of (\ref{ce:lemma1}) is $0.83255461$. Let $r_{\max} = 0.93255461$ and consider the following measurement set:
\begin{equation}
    S := \left\{(r_{\max}, 0), (-r_{\max}, 0), (0, r_{\max}), (0, -r_{\max}) \right\}.
\end{equation}
The estimation error at $x$ using measurement set $S$ is
\begin{equation}
    \begin{split}
    f_x(S) &= 0.443771 < \Delta,
    \end{split}
\end{equation}
which shows the contradiction.
\oprocend
\end{exmp}

\begin{figure}
    \centering
    \includegraphics[width=0.95\linewidth]{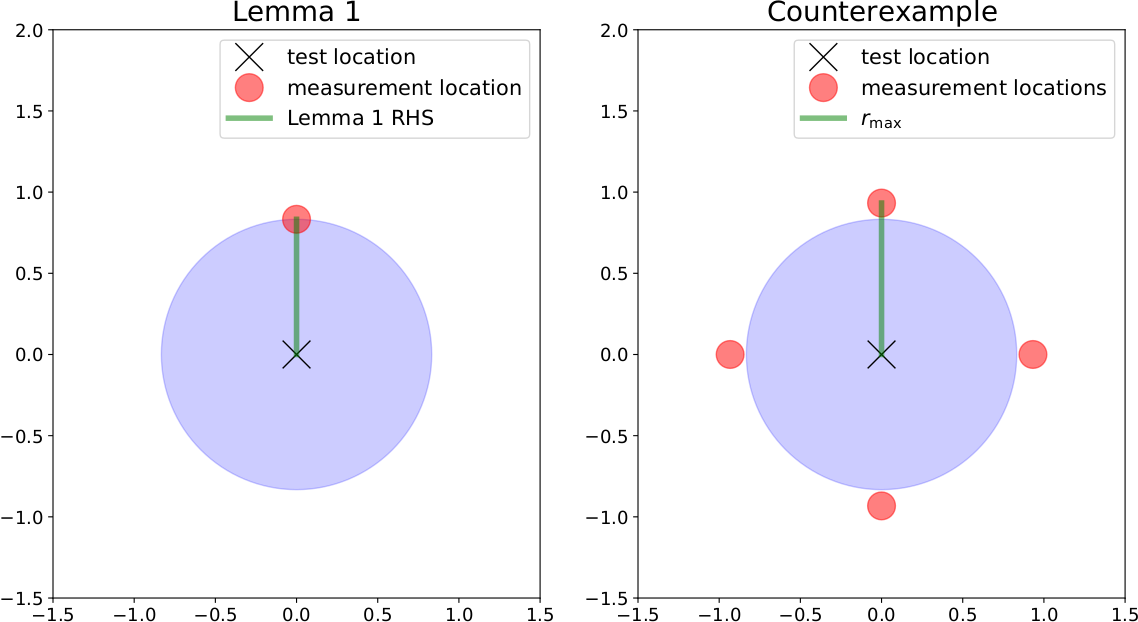}
    \caption{Visual depiction of the counterexample. Left: Lemma 1 in \cite{suryan2020learning}, constructs a lower bound by placing samples at the red circle. Right: The set of four measurement locations obtains a lower estimation error.}
    \label{fig:counterexample}
\end{figure}

The counterexample uses four points to invalidate the lower bound on $r_{\max}$. However, by placing a large number of samples outside $r_{\max}$, the error can be much lower, thereby underestimating the lower bound on $r_{\max}$ by a large margin. By using Assumption \ref{assumption:rmax}, which is a common practical assumption in geostatistics \cite{webster2007geostatistics}, the approximation ratios in \cite{suryan2020learning} still hold (though Lemmas 1 and 3 in \cite{suryan2020learning} do not hold). Since the proposed algorithm \cite{suryan2020learning} covers the environments with circles of radius $r_{\max}$ (which can be large), and covers it with smaller circles, it uses an excessive number of samples. This motivates us to develop an alternative algorithm.

\subsection{\textsc{\hc}} \label{subsec:alg_prob1}
Our approach to solving the sample placement problem is simple: cover the environment with circles of a specific radius $r_{\min}$. Then, one solution is the set of circle centers. We shall see that an appropriate selection of $r_{\min}$ guarantees feasibility of this approach. 

\begin{lem}[Sufficient Condition]
Let $S \subset D$ be a $r_{\min}$-covering of the environment $D$, where
\begin{equation} \label{eq:rmin}
    r_{\min} := L \sqrt{ - \log \Big( (\sigma_0^2 - \Delta) \frac{\sigma_0^2 + \sigma^2}{\sigma_0^4} \Big)},
\end{equation}
then $S$ is feasible for the sample placement problem.
\end{lem}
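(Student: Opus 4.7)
The plan is to reduce the general bound on $f_x(S)$ to the single-sample case and then verify the sufficient distance algebraically. The argument rests on two ingredients: (i) monotonicity of the estimation error in the sample set, and (ii) the closed-form error when $|S|=1$.

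First I would invoke (or briefly justify) the standard fact from kriging / linear least-squares theory that $f_x$ is monotone non-increasing in $S$: for any $S' \subseteq S \subseteq D$, $f_x(S) \le f_x(S')$. This follows because the optimal linear estimator based on $S$ is at least as good as any estimator ignoring part of $S$, and in particular at least as good as the best estimator using only the measurements in $S'$. Using the $r_{\min}$-covering hypothesis, for every $x \in D$ there exists an $x_i \in S$ with $\rho(x,x_i) \le r_{\min}$. Hence $f_x(S) \le f_x(\{x_i\})$, and it suffices to show $f_x(\{x_i\}) \le \Delta$.

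Second, I would compute $f_x(\{x_i\})$ directly from the definition in \eqref{equation:estimation_error}. With a single sample, $C_{\{x_i\}} = \sigma_0^2 + \sigma^2$ (a scalar) and $\boldsymbol{b}_{x,\{x_i\}} = \phi(\rho(x,x_i))$, so
\begin{equation*}
f_x(\{x_i\}) = \sigma_0^2 - \frac{\phi(\rho(x,x_i))^2}{\sigma_0^2+\sigma^2}.
\end{equation*}
Requiring this to be at most $\Delta$ rearranges to $\phi(\rho(x,x_i))^2 \ge (\sigma_0^2 - \Delta)(\sigma_0^2 + \sigma^2)$. Substituting the squared-exponential form $\phi(r) = \sigma_0^2 e^{-r^2/(2L^2)}$ and taking logarithms yields exactly $\rho(x,x_i) \le L\sqrt{-\log\!\bigl((\sigma_0^2-\Delta)(\sigma_0^2+\sigma^2)/\sigma_0^4\bigr)} = r_{\min}$, which the covering hypothesis guarantees. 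Combining with step one gives $f_x(S) \le \Delta$ for every $x \in D$, so $S$ is feasible.

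The main obstacle I anticipate is ensuring the monotonicity step is on firm footing and well cited, since the problem formulation writes $f_x(S)$ purely algebraically via $C_S^{-1}$ rather than as a posterior variance; one should either cite a standard reference (e.g., the kriging/Gaussian process literature) or justify monotonicity from the block-matrix Schur-complement identity for $C_{S}^{-1}$ when one partitions $S = S' \cup \{x_{\text{new}}\}$. A minor technical point is also verifying that the argument of the logarithm in \eqref{eq:rmin} is in $(0,1]$ so that $r_{\min}$ is real; this follows from $\Delta < \sigma_0^2$ and $\sigma^2 \ge 0$ (with implicit assumption that $\Delta$ is large enough relative to $\sigma^2$ that the argument does not exceed $1$, which corresponds to the trivial case where no samples are needed).
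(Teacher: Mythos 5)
Your proof is correct and follows essentially the same route as the paper's: monotonicity of $f_x$ in the measurement set, the covering hypothesis to obtain a sample $x_i$ with $\rho(x,x_i)\leq r_{\min}$, and the single-sample calculation $f_x(\{x_i\}) = \sigma_0^2 - \phi(\rho(x,x_i))^2/(\sigma_0^2+\sigma^2) \leq \Delta$, which the paper leaves implicit. Your closing aside slightly mislabels the edge case where the argument of the logarithm exceeds $1$ (i.e., $\Delta < \sigma_0^2\sigma^2/(\sigma_0^2+\sigma^2)$): that is the regime in which even a co-located noisy sample cannot achieve error $\Delta$, not a trivial case where no samples are needed, but this does not affect the validity of your argument.
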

\begin{proof}
We will show that for any point $x \in D$ in the environment, the estimation error $f_x(S) \leq \Delta$. Since $S$ is a $r_{\min}$-covering, there exists a point $x_i \in S$ such that $\rho(x, x_i) \leq r_{\min}$. In addition, since $f_x(S)$ is a monotonically decreasing set function \cite{krause2008robust}, it suffices to show $f_x(\{x_i\}) \leq \Delta$. Then,
    \begin{equation*}
        \begin{split}
            \rho(x , x_i) \leq r_{\min}
            \implies e^{-\frac{\rho(x, x_i)^2}{L^2}} &\geq (\sigma_0^2 - \Delta) \frac{\sigma_0^2 + \sigma^2}{\sigma_0^4}\\ \implies f_x(\{x_i\}) &\leq \Delta.
        \end{split}
    \end{equation*}
\end{proof}

The last step is to generate the $r_{\min}$-covering. Since the objective is to minimize the number of samples, one idea is to compute the minimum $r_{\min}$-covering. Unfortunately, this is difficult to compute in general. Instead, we compute a cover from a hexagonal tiling, which is the densest way to pack circles in the Euclidean plane \cite{fejes1942dichteste,chang2010simple}. The steps are described in Algorithm~\ref{alg:hex_cover}. The main step is $\textsc{HexagonalTiling}$ (Line 2), which returns the set of centers in the hexagonal tiling of edge length $r_{\min}$. The centers are arranged in staggered columns where the vertical distance between two samples is $\sqrt{3} r_{\min}$ and the horizontal spacing between the columns is $1.5 r_{\min}$. An example of a hexagonal tiling is shown in Figure~\ref{fig:intro_fig}. By construction, the circles centered at these points will circumscribe the hexagon, thereby covering the environment. Note that $\textsc{\hc}$ runs in polynomial time since it scales linearly with the area of the environment i.e., the time complexity is $O\left(\text{area}(D)\right)$.

\begin{algorithm}	
	\DontPrintSemicolon 
	\KwIn{Environment $D \subset \mathbb{R}^2$, Error Threshold: $\Delta$, Hyperparameters: $L, \sigma_0^2$, Noise Variance $\sigma^2$}
	\KwOut{Measurement Set: $S \subset D$}		
	Compute $r_{\min}$ according to Equation \eqref{eq:rmin}.\\
	$S_{\text{HC}} = \textsc{HexagonalTiling}(D, r_{\min})$\\
	\Return{$S_{\text{HC}}$}
	\caption{\textsc{\hc}}
	\label{alg:hex_cover}
\end{algorithm}

\begin{assumption}[Boundary Conditions] \label{rem:boundary_condition}
For some environments, there exist regions close to the boundary that are not covered by the hexagonal tiling. This is solved by taking a few more measurements to ensure feasibility. However, for the purposes of algorithm analysis, we will assume that the hexagonal tiling (Line 2) covers the environment completely.
\end{assumption}

To analyse the performance of this approach, the first step is to compute an upper bound on the solution $|S_{\text{HC}}|$ returned by \textsc{\hc}. To do this, we require the definition of a Minkowksi sum and a lemma bounding the maximum area of the Minkowski sum of a convex set and a unit circle. Our proof of this result is based on the lecture notes~\cite{lecturenotes} and is included here for completeness.

Let $B := \{(x,y): x^2 + y^2 \leq 1\}$ be the unit circle and for any convex set $\Theta$ and $c > 0$, define $c\Theta := \{cx : x \in \Theta\}$.

\begin{definition}[Minkowski Sum]
For any two sets $A, B$, the Minkowski sum is $A + B := \{x + y: x \in A, y \in B\}$.
\end{definition}
\begin{lem} \label{lem:upper_bound_area_convex}
Given a convex set $\Theta \subset \mathbb{R}^2$, the unit circle $B$, and $r > 0$ with $r B \subset \Theta$, then \[ \text{area}\left(\Theta + \frac{r}{2} B \right) \leq \frac{9}{4} \text{area}\left(\Theta\right). \]
\end{lem}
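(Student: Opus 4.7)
My plan is to show the stronger set-theoretic inclusion
$$\Theta + \tfrac{r}{2} B \;\subset\; \tfrac{3}{2}\,\Theta,$$
from which the area bound follows immediately, since the planar Lebesgue measure scales as $\mathrm{area}(c\Theta) = c^2 \mathrm{area}(\Theta)$, giving $\mathrm{area}\!\left(\Theta + \tfrac{r}{2} B\right) \le (3/2)^2 \mathrm{area}(\Theta) = \tfrac{9}{4}\mathrm{area}(\Theta)$. This route bypasses Steiner's formula and any direct perimeter estimate, which would otherwise be the main technical nuisance.

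To prove the inclusion, I would take an arbitrary $x \in \Theta$ and $y \in \tfrac{r}{2} B$ and exhibit an element $w \in \Theta$ such that $x + y = \tfrac{3}{2} w$. The natural candidate is $w := \tfrac{2}{3}(x+y)$, which I can rewrite as
$$w \;=\; \tfrac{2}{3} x + \tfrac{2}{3} y \;=\; \tfrac{2}{3}\,x + \tfrac{1}{3}\,(2y).$$
Now $2y \in rB$, and by hypothesis $rB \subset \Theta$, so $2y \in \Theta$. Since $\tfrac{2}{3} + \tfrac{1}{3} = 1$ with both coefficients nonnegative, $w$ is a convex combination of the two points $x, 2y \in \Theta$, and convexity of $\Theta$ yields $w \in \Theta$. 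Hence $x + y \in \tfrac{3}{2}\Theta$, establishing the inclusion.

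\textbf{Expected difficulty.} The argument is almost entirely a one-line convexity trick; the only subtlety is spotting that doubling $y$ lands in $\Theta$ (via the assumption $rB \subset \Theta$) so that a clean convex combination with $x$ is available. Once that is noticed, the rest is automatic. I do not foresee any substantive obstacle, and in particular there is no need to invoke perimeter inequalities, the Brunn–Minkowski inequality, or inner parallel body estimates.
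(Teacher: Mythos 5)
Your proof is correct and is essentially identical to the paper's argument: both establish the inclusion $\Theta + \tfrac{r}{2}B \subset \tfrac{3}{2}\Theta$ by writing a point of the sum as $\tfrac{3}{2}$ times a convex combination of $x \in \Theta$ and $2y \in rB \subset \Theta$, then conclude via the quadratic scaling of area. The only difference is cosmetic phrasing (the paper routes through $\tfrac{r}{2}B \subset \tfrac{1}{2}\Theta$ before forming the same convex combination).
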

\begin{proof}
Take any $a \in \Theta + \frac{r}{2} B$. Then, it can be expressed as $a = x + y$, where $x \in \Theta, y \in \frac{r}{2}B$. Since $r B \subset \Theta$, then $\frac{r}{2} B \subset \frac{1}{2} \Theta$. Thus, $a \in \Theta + \frac{1}{2}\Theta$. Now, $a$ can be expressed as $a = s + \frac{t}{2}$, where $s, t \in \Theta$. Then, $\frac{2}{3} a = \frac{2}{3} s + \frac{t}{3}$. Since $\Theta$ is convex, we have $\frac{2}{3} a \in \Theta$ which implies $a \in \frac{3}{2} \Theta$. Thus, $\Theta + \frac{r}{2} B \subset \frac{3}{2} \Theta$ from which the final result follows.
\end{proof}

Based on Lemma \ref{lem:upper_bound_area_convex} we now establish an upper bound on the number of measurements $|S_\text{HC}|$.
\begin{lem}[Upper Bound] \label{lem:upperbound_hexcover}
Let $\delta > 0$ be arbitrarily small and let $(\sqrt{3}-\delta) r_{\min} B \subset D$, where $B$ is the unit circle. Then for some arbitrarily small $\epsilon_\delta > 0$,
\begin{equation*}
    |S_\text{HC}| \leq (3+\epsilon_\delta)\frac{\text{area}(D)}{\pi r_{\min}^2}.
\end{equation*}
\end{lem}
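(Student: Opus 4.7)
The plan is to identify $S_{\text{HC}}$ with the set of hexagonal-lattice points lying in $D$ and to exploit disjointness of their Voronoi cells. Let $F$ denote the regular hexagon of edge length $r_{\min}$ centered at the origin, so $F$ has area $\frac{3\sqrt{3}}{2}\, r_{\min}^2$ and is contained in the disk $r_{\min} B$. The translates $\{p+F : p \in L\}$ of the underlying hexagonal lattice $L$ tile the plane without overlap, hence the cells $p + F$ for $p \in S_{\text{HC}}$ are pairwise disjoint (up to measure zero). Since every $p \in S_{\text{HC}} \subset D$, each cell satisfies $p + F \subset D + F \subset D + r_{\min} B$, giving the starting bound
\[
|S_{\text{HC}}| \cdot \tfrac{3\sqrt{3}}{2}\, r_{\min}^2 \;\leq\; \text{area}(D + r_{\min} B).
\]

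Next, I would establish a mild generalization of Lemma~\ref{lem:upper_bound_area_convex}. The very same convex-combination argument used in its proof shows that whenever $rB \subset \Theta$ for a convex $\Theta$, one has $\Theta + \beta B \subset (1 + \beta/r)\,\Theta$ for every $\beta > 0$, and hence $\text{area}(\Theta + \beta B) \leq (1 + \beta/r)^{2}\, \text{area}(\Theta)$. Applying this with $\Theta = D$, $r = (\sqrt{3}-\delta) r_{\min}$, and $\beta = r_{\min}$ gives
\[
\text{area}(D + r_{\min} B) \;\leq\; \Bigl(1 + \tfrac{1}{\sqrt{3}-\delta}\Bigr)^{2} \text{area}(D).
\]
Substituting back and rearranging then yields $|S_{\text{HC}}| \leq (3 + \epsilon_\delta)\,\text{area}(D)/(\pi r_{\min}^{2})$ with the explicit choice $\epsilon_\delta := \tfrac{2\pi}{3\sqrt{3}}\bigl(1 + \tfrac{1}{\sqrt{3}-\delta}\bigr)^{2} - 3$.

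The main obstacle is that Lemma~\ref{lem:upper_bound_area_convex} as stated is not directly applicable: it bounds $\text{area}(\Theta + \tfrac{r}{2} B)$ under the hypothesis $rB \subset \Theta$, which in this setting would require $2 r_{\min} B \subset D$, whereas we only know $(\sqrt{3}-\delta) r_{\min} B \subset D$ and $\sqrt{3} - \delta < 2$. The generalization above circumvents this by permitting expansion radii exceeding $r/2$, paying a factor $(1+\beta/r)^{2}$ in place of the constant $9/4$. Once that piece is in hand, verifying that $\epsilon_\delta$ is small reduces to a short arithmetic check: as $\delta \downarrow 0$, $\epsilon_\delta \to \tfrac{4\pi(2\sqrt{3}+3)}{27} - 3 \approx 0.009$, a small positive constant, so choosing $\delta$ correspondingly small secures the claimed bound.
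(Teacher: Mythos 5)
Your decomposition is genuinely different from the paper's: you use the disjoint hexagonal tiling cells themselves (area $\tfrac{3\sqrt{3}}{2}r_{\min}^2$, expanded containment $D+r_{\min}B$) together with a generalized form of Lemma~\ref{lem:upper_bound_area_convex}, whereas the paper uses the fact that the tiling centers are a $(\sqrt{3}-\delta)r_{\min}$-packing, so the balls $B(x_i,\tfrac{\sqrt{3}-\delta}{2}r_{\min})$ are disjoint and contained in $D+\tfrac{(\sqrt{3}-\delta)r_{\min}}{2}B$, to which Lemma~\ref{lem:upper_bound_area_convex} applies exactly as stated (with $r=(\sqrt{3}-\delta)r_{\min}$), yielding $\epsilon_\delta=\bigl(\tfrac{3}{\sqrt{3}-\delta}\bigr)^2-3\to 0$. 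Your generalization $\text{area}(\Theta+\beta B)\le(1+\beta/r)^2\,\text{area}(\Theta)$ is correct (the same convex-combination argument goes through), and all containments you use are valid.

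The gap is in the final constant, and it is real: your $\epsilon_\delta=\tfrac{2\pi}{3\sqrt{3}}\bigl(1+\tfrac{1}{\sqrt{3}-\delta}\bigr)^2-3$ tends to $\tfrac{4\pi(2\sqrt{3}+3)}{27}-3\approx 0.009$ as $\delta\downarrow 0$, so it is bounded away from zero and cannot be made arbitrarily small, whereas the lemma asserts (and its use in Theorem~\ref{thm:hexcover} requires) a constant of the form $3+\epsilon_\delta$ with $\epsilon_\delta$ arbitrarily small; your closing sentence that choosing $\delta$ small ``secures the claimed bound'' is therefore not correct --- you prove a slightly weaker bound with limiting constant $\approx 3.009$, which would degrade Theorem~\ref{thm:hexcover} to $\approx 3.01\,\tfrac{r_{\max}^2}{r_{\min}^2}$. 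The trade-off is intrinsic to your choice of cell: the hexagon has larger area than the ball $\tfrac{\sqrt{3}}{2}r_{\min}B$, but you must dilate $D$ by the full circumradius $r_{\min}$ rather than by $\tfrac{\sqrt{3}-\delta}{2}r_{\min}$, and the loss in the dilation outweighs the gain in cell area. The fix is immediate with the machinery you already have: replace the hexagon cells by the disjoint balls $B(p,\tfrac{\sqrt{3}-\delta}{2}r_{\min})$, $p\in S_{\text{HC}}$ (pairwise center distances are $\sqrt{3}r_{\min}$), note they lie in $D+\tfrac{(\sqrt{3}-\delta)r_{\min}}{2}B$, and apply your generalized lemma with $\beta=\tfrac{(\sqrt{3}-\delta)r_{\min}}{2}$ (equivalently, Lemma~\ref{lem:upper_bound_area_convex} as stated); this recovers $\epsilon_\delta=\bigl(\tfrac{3}{\sqrt{3}-\delta}\bigr)^2-3$, which is arbitrarily small, and is precisely the paper's argument.
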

\begin{proof}
Denote the solution $S_{\text{HC}} := \{x_1, \ldots, x_k\}$. Then, under Assumption~\ref{rem:boundary_condition}, $S_\text{HC}$ is a $(\sqrt{3}-\delta) r_{\min}$-packing for the environment $D$. This is because the minimum distance between any two points in the hexagonal cover is $\sqrt{3} r_{\min}$ i.e., $ {\min_{i \neq j}}\ \rho(x_i, x_j) = \sqrt{3} r_{\min} > (\sqrt{3}-\delta) r_{\min}$,  which ensures it is a $(\sqrt{3}-\delta) r_{\min}$-packing. Since the circles in the packing are disjoint, we have $\bigcup_{i}^k B(x_i, \frac{(\sqrt{3}-\delta)}{2} r_{\min}) \subset D + \frac{(\sqrt{3}-\delta)}{2} r_{\min} B$. Then, letting $c = \sqrt{3} - \delta$ and taking the area on both sides gives
\begin{equation}
    \begin{split}
        \text{area}\left(\bigcup_{i}^{|S_{\text{HC}}|} B(x_i, cr_{\min}/2)\right) &= |S_{\text{HC}}| \frac{\pi c^2 r^2_{\min}}{4}\\
        \leq \text{area}\left(D + \frac{cr_{\min}}{2} B\right) &\leq  \frac{9}{4} \text{area}(D)\\
        \implies |S_{\text{HC}}| &\leq \left(\frac{3}{c}\right)^2 \frac{\text{area}(D)}{\pi r^2_{\min}}\\
                        &= (3+\epsilon_\delta) \frac{\text{area}(D)}{\pi r^2_{\min}}, 
    \end{split}
\end{equation}
where the second inequality follows from Lemma~\ref{lem:upper_bound_area_convex} and \[\epsilon_\delta =  \left(\frac{3}{\sqrt{3}-\delta}\right)^2 - 3\] is arbitrarily small.
\end{proof}

To obtain an approximation factor for Algorithm \ref{alg:hex_cover}, we characterize an optimal solution $S^*$. First, we establish a property on any feasible solution which we will then use to get a lower bound on $|S^*|$.
\begin{lem}[Feasible Sample Placement] \label{lem:feasible_sample}
Any feasible solution $S$ is a $r_{\max}$-covering of the environment $D$.
\end{lem}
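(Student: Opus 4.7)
The plan is a short proof by contradiction that leans directly on the effective range assumption (Assumption \ref{assumption:rmax}).

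Suppose for contradiction that $S$ is feasible for Problem \ref{prob:minimum_measurement_set} but \emph{not} an $r_{\max}$-covering of $D$. Then by the definition of a covering there exists a test point $x \in D$ such that $\rho(x, x_i) > r_{\max}$ for every $x_i \in S$. I would next invoke Assumption \ref{assumption:rmax}: since every measurement location in $S$ lies strictly beyond the effective range of $x$, the covariance between $Z(x)$ and $Z(x_i)$ is treated as zero for each $i$, so $\phi(\rho(x, x_i)) = 0$ and therefore the cross-covariance vector $\boldsymbol{b}_{x, S} \in \mathbb{R}^{|S|}$ is identically the zero vector.

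Substituting $\boldsymbol{b}_{x, S} = 0$ into the estimation error expression (\ref{equation:estimation_error}) yields
\begin{equation*}
    f_x(S) \;=\; \phi(0) - \boldsymbol{b}_{x,S}' C_S^{-1} \boldsymbol{b}_{x,S} \;=\; \phi(0) \;=\; \sigma_0^2.
\end{equation*}
But the problem statement requires $\Delta < \sigma_0^2$, so $f_x(S) = \sigma_0^2 > \Delta$, contradicting the feasibility of $S$ at the point $x$. Hence every feasible $S$ must be an $r_{\max}$-covering.

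There is no real obstacle here beyond recognizing that the only thing the effective range assumption buys us is exactly $\boldsymbol{b}_{x, S} = 0$ whenever all samples sit outside $B(x, r_{\max})$; the rest is algebraic. The one subtlety worth flagging in the write-up is that we do \emph{not} need $C_S^{-1}$ to exist in any delicate sense — the quadratic form simply vanishes because its left (and right) factor is zero — so the argument applies uniformly to any $S \subset D$, regardless of its geometric configuration.
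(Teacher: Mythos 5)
Your proof is correct and follows essentially the same route as the paper: a contradiction argument in which a point $x$ with no sample within $r_{\max}$ is combined with Assumption~\ref{assumption:rmax} to conclude $f_x(S) = \sigma_0^2 > \Delta$. Your additional remark that the effective range assumption forces $\boldsymbol{b}_{x,S} = 0$ simply makes explicit the step the paper leaves implicit.
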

\begin{proof}
Suppose not. Then, there exists $x \in D$ such that for all $y \in S$, $\rho(x,y) > r_{\max}$. Using Assumption \ref{assumption:rmax}, the estimation error at $x$ using $S$ is $f_x(S) = \sigma_0^2 > \Delta$, contradicting feasibility of $S$ for the sample placement problem.
\end{proof}
\begin{lem}[Lower Bound] \label{lem:lower_bound}
A lower bound on the optimal value $|S^*|$ for the sample placement problem is
\begin{equation*}
    |S^*| \geq \frac{\text{area}(D)}{\pi r^2_{\max}}.
\end{equation*}
\end{lem}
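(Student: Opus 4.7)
The plan is to combine the preceding Feasible Sample Placement lemma (Lemma~\ref{lem:feasible_sample}) with a simple area-summation argument. By that lemma, any feasible measurement set, and in particular the optimal solution $S^*$, must be an $r_{\max}$-covering of $D$. So I would start the proof by invoking that fact and writing the covering condition explicitly as $D \subset \bigcup_{x_i \in S^*} B(x_i, r_{\max})$.

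Next, I would take the area (i.e.\ the 2D Lebesgue measure) of both sides. The left side is $\text{area}(D)$. The right side is upper bounded by $\sum_{x_i \in S^*} \text{area}(B(x_i, r_{\max}))$ via subadditivity of area under union (no disjointness is required here, since we only need an upper bound on the union). Each ball is a Euclidean disk of radius $r_{\max}$, so $\text{area}(B(x_i, r_{\max})) = \pi r_{\max}^2$. Collecting these, we obtain
\begin{equation*}
    \text{area}(D) \le |S^*| \cdot \pi r_{\max}^2,
\end{equation*}
and rearranging gives the claim.

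There is essentially no obstacle here: the key conceptual step is already contained in Lemma~\ref{lem:feasible_sample}, which converts the estimation-error feasibility constraint (via Assumption~\ref{assumption:rmax}) into a purely geometric covering requirement. After that, the bound is just subadditivity of area applied to a disk covering. One mild subtlety worth mentioning, but not belaboring, is that the balls $B(x_i, r_{\max})$ may extend outside $D$; this is harmless since we only use $D \subset \bigcup_i B(x_i, r_{\max})$ to bound $\text{area}(D)$ from above, not to claim equality.
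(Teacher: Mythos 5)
Your proposal is correct and follows essentially the same argument as the paper: invoke Lemma~\ref{lem:feasible_sample} to get that $S^*$ is an $r_{\max}$-covering of $D$, then bound $\text{area}(D)$ by subadditivity of area over the union of disks, giving $\text{area}(D) \leq |S^*| \pi r_{\max}^2$. No gaps.
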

\begin{proof}
Using Lemma \ref{lem:feasible_sample}, $S^*$ is a $r_{\max}$-covering of the environment $D$ i.e., $\bigcup_{x \in S^*} B(x, r_{\max}) \supset D$. Taking area on both sides gives
\begin{equation*}
    \begin{split}
 \text{area}(D)  &\leq     \text{area}\left(\bigcup_{x \in S^*} B(x, r_{\max})\right)\\
 &\leq \sum_{x \in S^*} \text{area}\left(B(x, r_{\max})\right)
        = |S^*| \pi r^2_{\max},
    \end{split}
\end{equation*}
where the second inequality follows from the fact that area is a sub-additive function.
\end{proof}

\begin{thm} \label{thm:hexcover}
For an arbitrarily small $\epsilon>0$, \textsc{\hc} is an $\alpha$-approximation algorithm for the sample placement problem where 
    \begin{equation*}
        \alpha := 3\frac{r^2_{\max}}{r^2_{\min}} + \epsilon.
    \end{equation*}
\end{thm}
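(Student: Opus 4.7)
The plan is to obtain the approximation ratio by directly dividing the upper bound on $|S_\text{HC}|$ established in Lemma~\ref{lem:upperbound_hexcover} by the lower bound on $|S^*|$ established in Lemma~\ref{lem:lower_bound}. The machinery is essentially in place; what remains is a short chain of inequalities plus the absorption of the slack parameter $\epsilon_\delta$ into the statement's $\epsilon$.

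First I would invoke Lemma~\ref{lem:upperbound_hexcover} to write
\[
|S_\text{HC}| \;\leq\; (3+\epsilon_\delta)\,\frac{\text{area}(D)}{\pi r_{\min}^2},
\]
and simultaneously invoke Lemma~\ref{lem:lower_bound} to write $|S^*| \geq \text{area}(D)/(\pi r_{\max}^2)$, which is equivalent to
\[
\frac{1}{|S^*|} \;\leq\; \frac{\pi r_{\max}^2}{\text{area}(D)}.
\]
Multiplying these two bounds gives
\[
\frac{|S_\text{HC}|}{|S^*|} \;\leq\; (3+\epsilon_\delta)\,\frac{r_{\max}^2}{r_{\min}^2}
\;=\; 3\,\frac{r_{\max}^2}{r_{\min}^2} + \epsilon_\delta\,\frac{r_{\max}^2}{r_{\min}^2},
\]
and the $\text{area}(D)$ factors cancel cleanly.

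The last step is the $\epsilon$-bookkeeping. Since $\epsilon_\delta = (3/(\sqrt{3}-\delta))^2 - 3$ can be made arbitrarily small by choosing $\delta$ arbitrarily small (which is permissible because the interior ball hypothesis $(\sqrt{3}-\delta)r_{\min} B \subset D$ only needs $\delta > 0$), and because $r_{\max}/r_{\min}$ is a fixed constant determined by the covariance parameters, the quantity $\epsilon := \epsilon_\delta\,r_{\max}^2/r_{\min}^2$ is likewise arbitrarily small. Substituting this into the inequality above yields exactly the claimed ratio $\alpha = 3\,r_{\max}^2/r_{\min}^2 + \epsilon$, completing the proof.

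I do not anticipate any real obstacle: the two lemmas have already done the geometric and statistical work (hexagonal packing density on the upper side, the effective-range covering condition on the lower side). The only subtlety worth flagging is to make explicit that the interior-ball assumption required by Lemma~\ref{lem:upperbound_hexcover} is compatible with $D$ having non-empty interior (from the problem setup), so that $\delta$ really can be taken arbitrarily small without trivializing the theorem.
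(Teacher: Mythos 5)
Your proposal is correct and matches the paper's own proof essentially verbatim: both combine the upper bound of Lemma~\ref{lem:upperbound_hexcover} with the lower bound of Lemma~\ref{lem:lower_bound}, cancel the $\text{area}(D)$ factors, and absorb the slack as $\epsilon = \epsilon_\delta\, r_{\max}^2/r_{\min}^2$. Your extra remark on the compatibility of the interior-ball hypothesis with $\delta$ being arbitrarily small is a reasonable clarification but not a departure from the paper's argument.
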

\begin{proof}
Using Lemmas \ref{lem:upperbound_hexcover} and \ref{lem:lower_bound}, for some arbitrarily small $\epsilon_\delta > 0 $, $|S_{\text{HC}}| \leq (3+\epsilon_\delta) \frac{\text{area}(D)}{\pi r^2_{\min}} \leq (3+\epsilon_\delta) \frac{r^2_{\max}}{r^2_{\min}} |S^*| = (3 \frac{r^2_{\max}}{r^2_{\min}} + \epsilon)|S^*|$, where $\epsilon = \epsilon_\delta \frac{r^2_{\max}}{r^2_{\min}}$ is arbitrarily small.
\end{proof}
Theorem \ref{thm:hexcover} improves the previous approximation ratio of $18 \frac{r_{\max}^2}{r^2_{\min}}$ \cite{suryan2020learning}. Our approximation factor is independent of the environment $D$ but does depend on the error threshold $\Delta$ through $r_{\min}$. As we seek more accurate predictions (decreasing $\Delta)$, the radius of accurate estimation $r_{\min}$ reduces, and the number of measurements required will increase.

\subsection{\textsc{\hct}} \label{subsec:alg_prob2}
Our approach to solving the shortest tour problem builds on \textsc{\hc}. This is because the constraints are the same in both problems. Since we have already identified a feasible vertex set via \textsc{\hc}, our method uses Christofides' algorithm which finds a tour of length no more than 3/2 times the optimal \cite{korte2011combinatorial}. The steps are outlined in Algorithm~\ref{alg:hex_cover_tour}. The time complexity is dominated by Christofides' algorithm (Line 2) which runs in $O(n^3)$ time, where $n$ is the number of vertices in the graph. Since the number of vertices is $O(\text{area}(D))$, \textsc{\hct} runs in $O(\left(\text{area}(D)\right)^3)$ time.
\begin{algorithm}	
	\DontPrintSemicolon 
	\KwIn{Environment $D \subset \mathbb{R}^2$, Error Threshold: $\Delta$, Hyperparameters: $L, \sigma_0^2$, Noise Variance $\sigma^2$}
	\KwOut{Tour $T$ with $T \subset D$.}
	$S_{\text{HC}} = \textsc{\hc}(D, \Delta, L, \sigma_0^2, \sigma^2)$\\
	$T_{\text{HC}} = \textsc{ChrisotofidesTour}(S_\text{HC})$\\
	\Return{$T_{\text{HC}}$}
	\caption{\textsc{\hct}}
	\label{alg:hex_cover_tour}
\end{algorithm}

We begin by characterizing an upper bound on the length of the tour $T_{\text{HC}}$ produced by \textsc{\hct}.
\begin{lem}[Upper Bound] \label{lem:hexcovertour_upper_bound}
For an arbitrarily small $\epsilon > 0$, \[ \text{len}(T_{\text{HC}}) \leq {15.6} \frac{\text{area}(D)}{\pi r_{\min}} + \epsilon. \]
\end{lem}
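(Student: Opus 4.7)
The plan is to chain together three standard bounds, ending with the sample-count estimate from Lemma~\ref{lem:upperbound_hexcover}. Since $T_{\text{HC}}$ is produced by running Christofides on $S_{\text{HC}}$, we have
\[
\text{len}(T_{\text{HC}}) \leq \tfrac{3}{2}\,\text{OPT}\bigl(S_{\text{HC}}\bigr),
\]
where $\text{OPT}(S_{\text{HC}})$ denotes the optimal TSP tour on the hexagonal centers. The first step is then to bound $\text{OPT}(S_{\text{HC}})$ by twice the weight of a minimum spanning tree on $S_{\text{HC}}$, using the classical tree-doubling/shortcutting argument that applies in any metric space.

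The second step is to exhibit a spanning tree of $S_{\text{HC}}$ whose edges all have length $\sqrt{3}\,r_{\min}$, the distance between two adjacent hexagon centers. I would consider the graph $H$ on $S_{\text{HC}}$ whose edges connect pairs of hexagons that share a side, and argue that $H$ is connected: because $D$ is convex and (by Assumption~\ref{rem:boundary_condition}) fully covered by the hexagonal tiling, any two hexagons used by $\textsc{HC}$ can be linked by a sequence of side-adjacent hexagons obtained by tracing a straight line between their centers inside $D$. Connectedness of $H$ then yields a spanning tree whose total weight is $(|S_{\text{HC}}|-1)\sqrt{3}\,r_{\min}$, so $\text{MST}(S_{\text{HC}}) \leq \sqrt{3}\,r_{\min}\,|S_{\text{HC}}|$, and therefore $\text{OPT}(S_{\text{HC}}) \leq 2\sqrt{3}\,r_{\min}\,|S_{\text{HC}}|$.

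Combining the two reductions gives $\text{len}(T_{\text{HC}}) \leq 3\sqrt{3}\,r_{\min}\,|S_{\text{HC}}|$, and then plugging in Lemma~\ref{lem:upperbound_hexcover} yields
\[
\text{len}(T_{\text{HC}}) \leq 3\sqrt{3}\,r_{\min}\,(3+\epsilon_\delta)\frac{\text{area}(D)}{\pi r_{\min}^2} = \bigl(9\sqrt{3} + 3\sqrt{3}\,\epsilon_\delta\bigr)\frac{\text{area}(D)}{\pi r_{\min}}.
\]
Since $9\sqrt{3} < 15.6$, absorbing the $\epsilon_\delta$ term into an arbitrarily small additive $\epsilon$ completes the bound.

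The main obstacle is the connectedness argument for the hex-neighbor graph $H$. This is intuitively clear from convexity of $D$ and the fact that the tiling covers $D$ (Assumption~\ref{rem:boundary_condition}), but it does need to be justified carefully, since otherwise we could only claim the $O(\sqrt{n})$-type Euclidean-TSP bound rather than the tight per-vertex constant $\sqrt{3}\,r_{\min}$ that produces the $9\sqrt{3}$ constant. Everything else in the chain is arithmetic and standard.
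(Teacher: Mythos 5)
Your proof is correct and follows essentially the same route as the paper: the paper bounds the optimal tour on $S_{\text{HC}}$ by inserting vertices one at a time at a detour cost of $2\sqrt{3}\,r_{\min}$ each, which is exactly the tree-doubling bound you derive from a spanning tree of the hex-adjacency graph, and both arguments then apply the Christofides factor $3/2$ and Lemma~\ref{lem:upperbound_hexcover} to reach $9\sqrt{3} \approx 15.59 \leq 15.6$. Your explicit connectivity argument for the hex-adjacency graph (via convexity of $D$ and Assumption~\ref{rem:boundary_condition}) is a requirement the paper's insertion induction relies on only implicitly, so spelling it out is, if anything, slightly more careful.
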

\begin{proof}
Denote the vertex set of the tour by $V_\text{HC} := \{ x_1, \ldots, x_k \}$ and let $T_k$ denote the optimal TSP tour on the $k$ vertices of the tour $T_\text{HC}$. Note that in the hexagonal covering, the minimum distance between two points is $\sqrt{3} r_{\min}$. Now, we can bound $T_k$ by constructing a sub-optimal tour as follows. Take any two points $x_i, x_j \in V_\text{HC}$ such that $\rho(x_i, x_j) \leq \sqrt{3} r_{\min}$ and find the shortest tour in $V_{\text{HC}} \setminus x_i$. Then, we can create a tour by adding the edge to connect $x_j$ to $x_i$ and back. This gives an upper bound: 
\begin{equation*}
    \begin{split}
        \text{len}(T_k) &\leq \text{len}(T_{k-1}) + 2 \sqrt{3} r_{\min}.
    \end{split}
\end{equation*}
Using the fact that $\text{len}(T_1) = 0$ and Lemma \ref{lem:upperbound_hexcover}, we get \[ \text{len}(T_k) \leq  2 \sqrt{3} r_{\min} |V_\text{HC}| \leq 2 \sqrt{3} (3+\epsilon_\delta) \frac{\text{area}(D)}{\pi r_{\min}}.\]

Since we are using Christofides' algorithm, we have 
\begin{equation*}
    \begin{split}
        \text{len}(T_\text{HC}) \leq 1.5 \text{len}(T_k) &\leq 3 \sqrt{3} (3+\epsilon_\delta) \frac{\text{area}(D)}{\pi r_{\min}}\\
        & = 15.6 \frac{\text{area}(D)}{\pi r_{\min}} + \epsilon,
    \end{split} 
\end{equation*}
where $\epsilon = 3 \sqrt{3} \frac{\text{area}(D)}{\pi r_{\min}} \epsilon_\delta$ is an arbitrarily small number. 
\end{proof}

In the following lemma, we compute a lower bound on the length of the optimal tour $T^*$. First, we need a property of a maximal packing of a $r_{\max}$-covering.
\begin{lem} \label{lem:maximal_cover}
Let $S := \{x_1, \ldots, x_n\}$ be a $r_{\max}$-covering of the environment $D$. Then, any maximal $2r_{\max}$-packing $P \subset S$ of $S$ is a $3 r_{\max}$-covering of $D$.
\end{lem}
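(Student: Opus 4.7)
The plan is a direct triangle inequality argument exploiting the two defining properties: $S$ covers $D$ with radius $r_{\max}$, and $P \subset S$ is a \emph{maximal} $2r_{\max}$-packing of $S$ (so no element of $S \setminus P$ can be added to $P$ without destroying the packing property).

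First I would fix an arbitrary test point $x \in D$ and use the covering property of $S$ to obtain some $x_i \in S$ with $\rho(x,x_i) \le r_{\max}$. The goal is then to produce a point $z \in P$ within distance $3r_{\max}$ of $x$. I would split into the two natural cases. If $x_i \in P$, then $z = x_i$ works immediately, since $\rho(x,z) \le r_{\max} \le 3 r_{\max}$. Otherwise $x_i \in S \setminus P$, and here I would invoke the maximality of $P$: if no point of $P$ lay within distance $2 r_{\max}$ of $x_i$, then $P \cup \{x_i\}$ would still be a $2r_{\max}$-packing (since adding $x_i$ would keep all pairwise distances strictly greater than $2 r_{\max}$), contradicting maximality. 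Hence there exists $z \in P$ with $\rho(x_i, z) \le 2 r_{\max}$.

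Finally I would apply the triangle inequality to conclude
\begin{equation*}
    \rho(x,z) \;\le\; \rho(x,x_i) + \rho(x_i,z) \;\le\; r_{\max} + 2 r_{\max} \;=\; 3 r_{\max}.
\end{equation*}
Since $x \in D$ was arbitrary, this shows every point of $D$ has some element of $P$ within distance $3 r_{\max}$, which is exactly the definition of $P$ being a $3 r_{\max}$-covering of $D$.

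The only subtle point I expect to have to articulate carefully is the correct interpretation of ``maximal packing'' relative to the strict-inequality version of the packing definition given earlier in the paper: one must argue that failure to have a nearby $z \in P$ would permit extending $P$, and that the boundary case $\rho(x_i,z)=2r_{\max}$ is already enough to prevent extension. Beyond that, everything is a single application of the triangle inequality, and no further geometric facts about $D$ are needed.
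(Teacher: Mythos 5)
Your proof is correct and follows essentially the same route as the paper: covering property of $S$, a maximality-of-$P$ contradiction to find a packing point within $2r_{\max}$ of the covering point, and one triangle inequality. Your explicit handling of the strict-inequality packing definition and the boundary case $\rho(x_i,z)=2r_{\max}$ is a slightly more careful articulation of a step the paper leaves implicit, but the argument is the same.
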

\begin{proof}
Using the triangle inequality and the fact that $S$ is a $r_{\max}$-covering of $D$, it suffices to show that for any $x_i \in S$, there exists a point in the packing $y \in P$ such that $\rho(x_i, y) \leq 2 r_{\max}$. Suppose not. Then, there exists $x_i \in S$ such that for all $y \in P$, $\rho(x_i, y) > 2r_{\max}$. But, we could add $x_i$ to $P$ and increase its size, contradicting the fact it is maximal.
\end{proof}

\begin{lem}[Lower Bound] \label{lem:lower_bound_tour}
A lower bound on the length of the optimal tour is \[ \text{len}(T^*) \geq \frac{2}{9} \frac{\text{area}(D)}{\pi r_{\max}}.\]
\end{lem}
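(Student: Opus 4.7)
The plan is to build a lower bound on $\text{len}(T^*)$ by extracting a well-spread subset from the optimal vertex set and then using a shortcutting argument. Let $V^*$ denote the vertex set of $T^*$. Since $T^*$ is feasible for the shortest tour problem, $V^*$ is feasible for the sample placement problem, so by Lemma~\ref{lem:feasible_sample}, $V^*$ is an $r_{\max}$-covering of $D$. The first step is then to invoke Lemma~\ref{lem:maximal_cover}: extract a maximal $2r_{\max}$-packing $P \subset V^*$, which is guaranteed to be a $3r_{\max}$-covering of $D$.

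Next I would exploit the covering property of $P$ to lower-bound its cardinality. Since $D \subset \bigcup_{x \in P} B(x, 3r_{\max})$, subadditivity of area gives
\begin{equation*}
    \text{area}(D) \leq |P| \cdot \pi (3 r_{\max})^2 \quad \Longrightarrow \quad |P| \geq \frac{\text{area}(D)}{9 \pi r_{\max}^2}.
\end{equation*}
This is the same style of bound used in Lemma~\ref{lem:lower_bound}, applied now to $P$ rather than $S^*$.

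The second step is the tour-length bound. I would use a standard shortcutting argument: traversing $T^*$ and skipping vertices not in $P$ yields a tour on $P$ whose length is, by the triangle inequality in the Euclidean metric, no larger than $\text{len}(T^*)$. Since $P$ is a $2r_{\max}$-packing, every pair of distinct points in $P$ is at Euclidean distance strictly greater than $2 r_{\max}$. Hence any tour on $P$ consists of $|P|$ edges, each of length greater than $2 r_{\max}$, so its total length is at least $2|P| r_{\max}$.

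Combining these two ingredients,
\begin{equation*}
    \text{len}(T^*) \; \geq \; 2 |P| r_{\max} \; \geq \; 2 r_{\max} \cdot \frac{\text{area}(D)}{9 \pi r_{\max}^2} \; = \; \frac{2}{9} \frac{\text{area}(D)}{\pi r_{\max}},
\end{equation*}
which is the claimed bound. The main conceptual step is the shortcutting argument that relates $\text{len}(T^*)$ to a tour on the sparser subset $P$; everything else is a straightforward combination of Lemma~\ref{lem:feasible_sample}, Lemma~\ref{lem:maximal_cover}, and the packing distance.
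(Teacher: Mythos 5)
Your proposal is correct and follows essentially the same route as the paper's proof: extract a maximal $2r_{\max}$-packing from the (feasible, hence $r_{\max}$-covering) vertex set of $T^*$, apply Lemma~\ref{lem:maximal_cover} and the area argument to bound its cardinality, and use shortcutting plus the packing separation to bound the tour length from below. The only cosmetic difference is that you state the shortcutting step slightly more explicitly than the paper does.
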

\begin{proof}
  We compute a maximal disjoint set $T' \subset T^*$ from the set of circles of radius $r_{\max}$ centered at points in the optimal tour $T^*$. By the triangle inequality, the optimal TSP tour through $T'$ gives us a bound on the length of the optimal tour: 
\begin{equation} \label{eq:b1}
    \text{len}(T^*) \geq \text{len}(T').
\end{equation}

Since $T'$ is a $2 r_{\max}$-packing, the minimum distance between any two vertices in $T'$ is $2r_{\max}$. Thus, 
\begin{equation} \label{eq:b2}
    \text{len}(T') \geq 2 r_{\max} |T'|.
\end{equation}

Since $T^*$ is a $r_{\max}$-covering of $D$ (Lemma \ref{lem:feasible_sample}), $T'$ is a $3r_{\max}$-covering (Lemma \ref{lem:maximal_cover}). Then, $\bigcup_{x \in T'} B(x, 3 r_{\max}) \supset D$ which implies 
\begin{equation} \label{eq:b3}
    \begin{split}
    \text{area}\left(\bigcup_{x \in T'} B(x, 3 r_{\max})\right) &\geq \text{area}(D)\\
    \implies \sum_{x \in T'} \text{area}\left(B(x, 3 r_{\max})\right) &\geq \text{area}(D)\\
    \implies |T'| &\geq \frac{\text{area}(D)}{9 \pi r^2_{\max}}.
    \end{split}
\end{equation}
Combining Equations \eqref{eq:b1}, \eqref{eq:b2}, and \eqref{eq:b3} gives us the result.
\end{proof}

The guarantee is a consequence of Lemmas \ref{lem:hexcovertour_upper_bound} and \ref{lem:lower_bound_tour}. 
\begin{thm}
\textsc{\hct} is an $\alpha$-approximation algorithm for the shortest tour problem where for an arbitrarily small $\epsilon > 0$, \[ \alpha :=  70.2 \frac{r_{\max}}{r_{\min}} + \epsilon.\]
\end{thm}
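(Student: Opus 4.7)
The plan is to combine the two bounds that have just been established: the upper bound on $\text{len}(T_{\text{HC}})$ from Lemma \ref{lem:hexcovertour_upper_bound} and the lower bound on $\text{len}(T^*)$ from Lemma \ref{lem:lower_bound_tour}. Since both bounds have the form $\text{const} \cdot \text{area}(D) / (\pi \cdot r)$, the $\text{area}(D)/\pi$ factor cancels when we take their ratio, leaving only a ratio of constants and of radii.

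First I would write $\text{len}(T_{\text{HC}}) \leq 15.6 \frac{\text{area}(D)}{\pi r_{\min}} + \epsilon'$ for an arbitrarily small $\epsilon' > 0$, as given by Lemma \ref{lem:hexcovertour_upper_bound}. Next I would invoke Lemma \ref{lem:lower_bound_tour} to rewrite the denominator $\text{area}(D)/(\pi r_{\max}) \leq \tfrac{9}{2} \text{len}(T^*)$, which upon substitution yields
\begin{equation*}
    \text{len}(T_{\text{HC}}) \;\leq\; 15.6 \cdot \frac{r_{\max}}{r_{\min}} \cdot \frac{\text{area}(D)}{\pi r_{\max}} + \epsilon' \;\leq\; 15.6 \cdot \frac{9}{2} \cdot \frac{r_{\max}}{r_{\min}} \text{len}(T^*) + \epsilon'.
\end{equation*}
Since $15.6 \cdot 9/2 = 70.2$, this gives the claimed factor of $70.2 \tfrac{r_{\max}}{r_{\min}}$ multiplied by $\text{len}(T^*)$, plus an additive $\epsilon'$ term.

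The last step is to absorb the additive $\epsilon'$ into a multiplicative $\epsilon$, using the fact that $\text{len}(T^*) > 0$ whenever the environment has nonempty interior (so at least two samples are required and the optimal tour has positive length). Setting $\epsilon := \epsilon' / \text{len}(T^*)$ produces the form $\alpha = 70.2 \tfrac{r_{\max}}{r_{\min}} + \epsilon$ stated in the theorem, and $\epsilon$ inherits the arbitrary smallness of $\epsilon_\delta$ (equivalently $\epsilon'$) from Lemma \ref{lem:hexcovertour_upper_bound}.

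I do not expect any real obstacle here, as the theorem is essentially an algebraic combination of the two preceding lemmas; the only subtlety is the bookkeeping of the arbitrarily small error term, which should be handled explicitly to keep the statement of $\alpha$ consistent with its $+\epsilon$ form.
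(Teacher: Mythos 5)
Your proposal is correct and matches the paper's intent exactly: the paper gives no explicit proof, stating only that the theorem "is a consequence of" the upper bound $\text{len}(T_{\text{HC}}) \leq 15.6\,\text{area}(D)/(\pi r_{\min}) + \epsilon$ and the lower bound $\text{len}(T^*) \geq \tfrac{2}{9}\,\text{area}(D)/(\pi r_{\max})$, and your algebraic combination (with $15.6 \cdot \tfrac{9}{2} = 70.2$ and the additive error absorbed into the multiplicative $\epsilon$) is precisely that argument.
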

This improves the previous approximation ratio of $9.33 + O(\frac{r^2_{\max}}{r^2_{\min}})$ \cite{suryan2020learning}. Note that if we use good heuristic solvers, such as the Lin-Kernighan heuristic \cite{korte2011combinatorial}, we can reduce our factor by (roughly) 1.5 leading to a $(46.8 \frac{r_{\max}}{r_{\min}} + \epsilon)$-approximation.

\section{Simulations} \label{section:simulations}
In this section, we demonstrate the effectiveness of \textsc{\hc}, \textsc{\hct} over \textsc{\dc}, \textsc{\dct} \cite{suryan2020learning} across environments of different sizes.

\subsubsection*{Experimental Setup}
We follow the setup in \cite{suryan2020learning} where a Gaussian Process was fit to a real world dataset of organic matter measurements in an agricultural field. The authors computed $L = 8.33$ meters, $\sigma_0 = 12.87$, and $\sigma^2 = 0.0361$. Since the covariance function is stationary, only the relative distances between points matter. This enables us to consider different environment sizes similar to the setup in \cite{dutta2022}. We consider rectangular environments whose area ranges from 400 to 40000 square metres.  Further, we consider three regimes of desired accuracy: $\Delta/\sigma_0^2 = 0.3, 0.2, 0.1$. These correspond to keeping the estimation error under $30\%, 20\%,$ and $10\%$ of the initial value $\sigma_0^2$, respectively. The experiments are run on an AMD Ryzen 7 2700 processor. 
\begin{rem}
The \textsc{\dc} and \textsc{\dct} algorithms end up placing samples outside the environment, which is not practically possible. In the simulations considered, we remove these samples and the redundant measurement locations that arise (as mentioned in the paper \cite{suryan2020learning}). In addition, for \textsc{\dct}, we use approximation algorithms for the TSP instead of the proposed lawn-mower tours. This change leads to shorter tour lengths in practice.
\end{rem}

\subsection{Sample Placement}
\begin{figure}
    \centering
    \includegraphics[width=\linewidth]{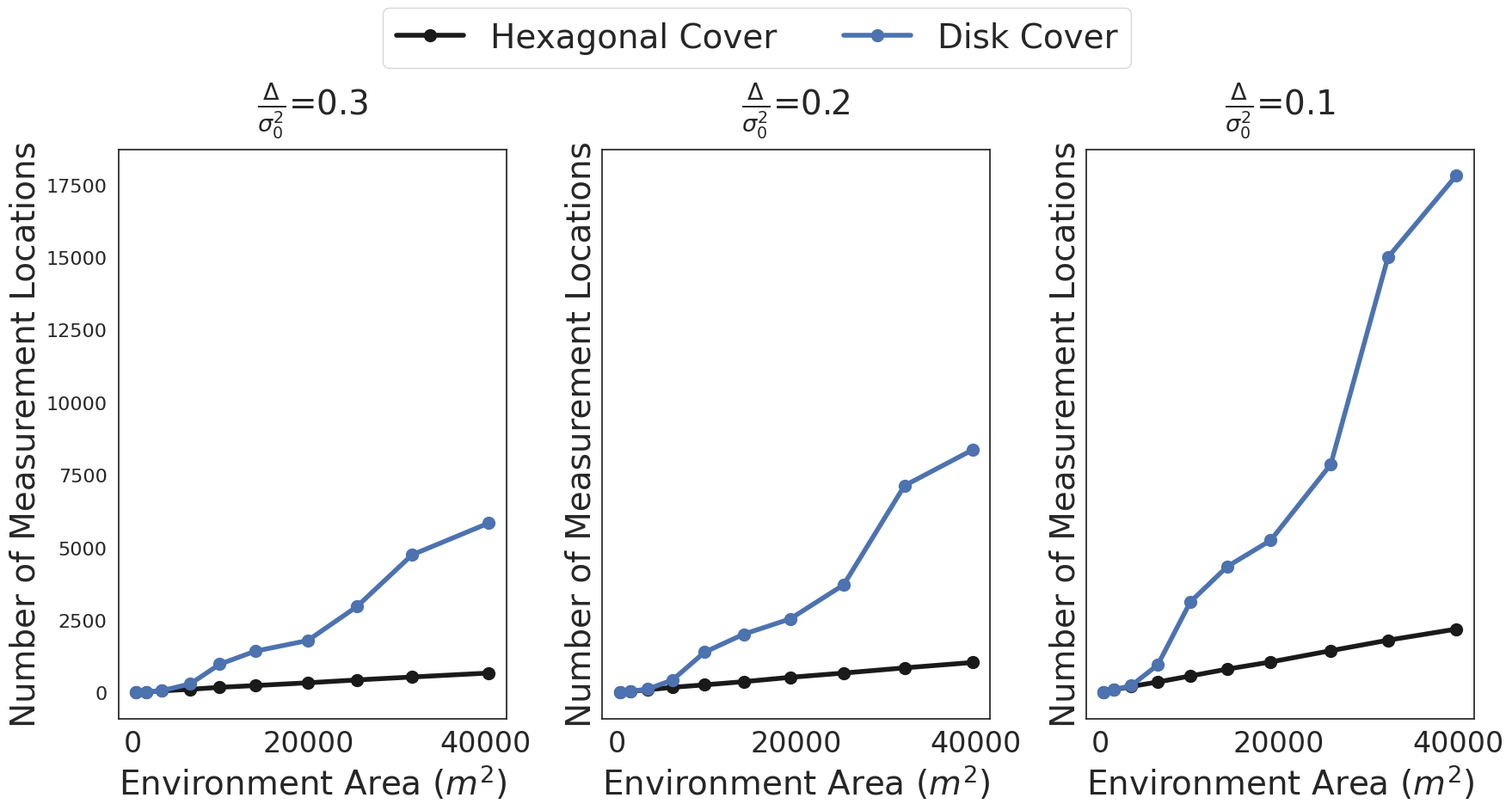}
    \caption{Comparison of the number of measurement locations for Problem \ref{prob:minimum_measurement_set} versus the environment size. \textsc{\dc} uses roughly 6 times as many measurements as \textsc{\hc}.}
    \label{fig:experiment1}
\end{figure}

The results for the sample placement problem are shown in Figure~\ref{fig:experiment1}. In each subplot, the number of measurements (y-axis) is plotted against the environment size (x-axis). The difference across the subplots is the error tolerance. As the error tolerance decreases, the number of measurements required will increase for any algorithm.
In each subplot, we observe that \textsc{\dc} uses more measurements than \textsc{\hc} (ours). In small environments, the difference is negligible as it does not take many measurements to get a feasible solution. However, as the environment size increases, \textsc{\dc} uses roughly 6 times as many measurements as \textsc{\hc}. Further, the number of measurements used by \textsc{\hc} increases roughly linearly with the environment size. This is not the case for \textsc{\dc} whose measurement set size grows much faster.

\subsection{Shortest Tours}

\begin{figure}
    \centering
    \includegraphics[width=\linewidth]{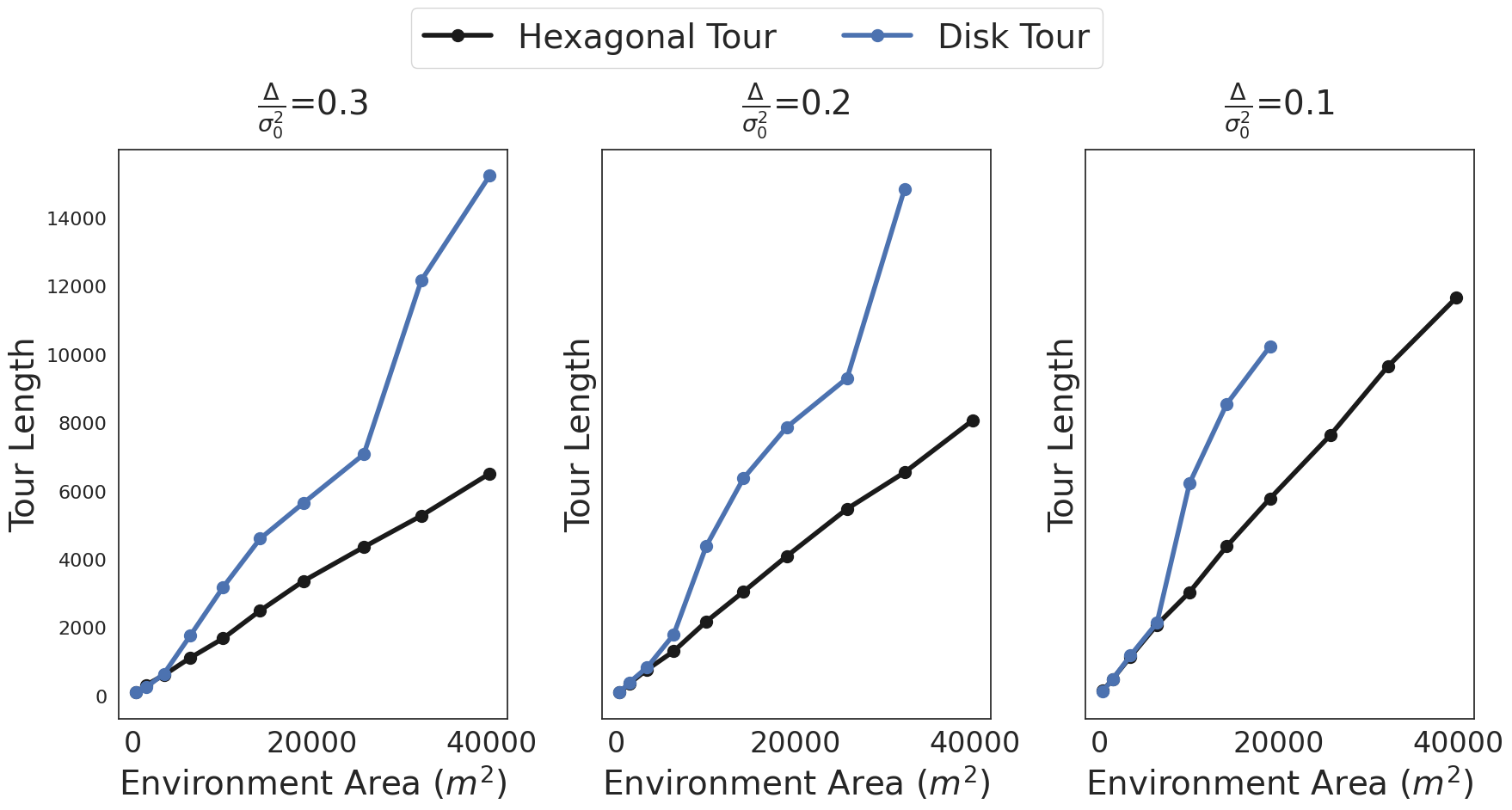}
    \caption{Comparison of the tour length vs environment size for Problem \ref{prob:minimum_length_tour}. \textsc{\hct} plans tours of lengths roughly half that of \textsc{\dct}. }
    \label{fig:experiment2}
\end{figure}

The results for the shortest tour problem are shown in Figure~\ref{fig:experiment2}. In each subplot, the tour length (y-axis) is plotted against the environment size (x-axis). As before, the difference across the subplots is the error tolerance. As the error tolerance decreases, the number of measurements required will increase, which subsequently increases the tour length.

In each subplot, we see that \textsc{\dct} produces tours of length larger than \textsc{\hct} (ours). For small environments, the difference in tour length is negligible. However, for larger environments, \textsc{\hct} produces tours that are roughly half the length of tours produced by \textsc{\dct}. Further, in the second and third plots, one may notice missing data points for \textsc{\dct}. This is because the number of vertices become too large (>8000) to run Christofides' algorithm in a reasonable amount of time and memory. In contrast, \textsc{\hct} scales better with the environment size.
\section{Conclusions \& Future Directions} \label{section:conclusions}

We considered the problem of finding the subset of measurement locations and the shortest tour in a random field that guarantees a desired level of prediction accuracy. We provided approximation algorithms for both problems based on hexagonal tiling of the Euclidean plane. In simulations, our algorithm outperforms prior work across environments of varying size. We also provided a counterexample to disprove a claim on a lower bound for the sample placement problem.
Looking forward, we are interested in two extensions. The first involves an importance weight associated with every field location leading to a weighted estimation error constraint requiring a new approach. Second, we are interested in the multi-robot problem for field estimation where we are given a team of robots tasked with accurate field mapping.

\Urlmuskip=0mu plus 1mu
\bibliographystyle{IEEEtran}
\bibliography{mybib}

\begin{thebibliography}{10}
\providecommand{\url}[1]{#1}
\csname url@rmstyle\endcsname
\providecommand{\newblock}{\relax}
\providecommand{\bibinfo}[2]{#2}
\providecommand\BIBentrySTDinterwordspacing{\spaceskip=0pt\relax}
\providecommand\BIBentryALTinterwordstretchfactor{4}
\providecommand\BIBentryALTinterwordspacing{\spaceskip=\fontdimen2\font plus
\BIBentryALTinterwordstretchfactor\fontdimen3\font minus
  \fontdimen4\font\relax}
\providecommand\BIBforeignlanguage[2]{{%
\expandafter\ifx\csname l@#1\endcsname\relax
\typeout{** WARNING: IEEEtran.bst: No hyphenation pattern has been}%
\typeout{** loaded for the language `#1'. Using the pattern for}%
\typeout{** the default language instead.}%
\else
\language=\csname l@#1\endcsname
\fi
#2}}

\bibitem{webster2007geostatistics}
R.~Webster and M.~A. Oliver, \emph{Geostatistics for environmental
  scientists}.\hskip 1em plus 0.5em minus 0.4em\relax John Wiley \& Sons, 2007.

\bibitem{connor2020radiological}
D.~T. Connor, K.~Wood, P.~G. Martin, S.~Goren, D.~Megson-Smith, Y.~Verbelen,
  I.~Chyzhevskyi, S.~Kirieiev, N.~T. Smith, T.~Richardson, \emph{et~al.},
  ``{Radiological Mapping of Post-Disaster Nuclear Environments using
  Fixed-Wing Unmanned Aerial Systems: a Study from Chornobyl},''
  \emph{Frontiers in Robotics and AI}, p. 149, 2020.

\bibitem{cressie2015statistics}
N.~Cressie, \emph{Statistics for spatial data}.\hskip 1em plus 0.5em minus
  0.4em\relax John Wiley \& Sons, 2015.

\bibitem{das2008algorithms}
A.~Das and D.~Kempe, ``{Algorithms for Subset Selection in Linear
  Regression},'' in \emph{Proceedings of the fortieth annual ACM symposium on
  Theory of computing}, 2008, pp. 45--54.

\bibitem{singh2009efficient}
A.~Singh, A.~Krause, C.~Guestrin, and W.~J. Kaiser, ``{Efficient Informative
  Sensing using Multiple Robots},'' \emph{Journal of Artificial Intelligence
  Research}, vol.~34, pp. 707--755, 2009.

\bibitem{stein1999interpolation}
M.~L. Stein, \emph{Interpolation of spatial data: some theory for
  kriging}.\hskip 1em plus 0.5em minus 0.4em\relax Springer Science \& Business
  Media, 1999.

\bibitem{williams2006gaussian}
C.~K. Williams and C.~E. Rasmussen, \emph{Gaussian Processes for Machine
  Learning}.\hskip 1em plus 0.5em minus 0.4em\relax MIT Press, 2006, vol.~2,
  no.~3.

\bibitem{suryan2020learning}
V.~Suryan and P.~Tokekar, ``{Learning a Spatial Field in Minimum Time with a
  Team of Robots},'' \emph{IEEE Transactions on Robotics}, vol.~36, no.~5, pp.
  1562--1576, 2020.

\bibitem{krause2008near}
A.~Krause, A.~Singh, and C.~Guestrin, ``{Near-optimal Sensor Placements in
  Gaussian Processes: Theory, Efficient algorithms and Empirical Studies.}''
  \emph{Journal of Machine Learning Research}, vol.~9, no.~2, 2008.

\bibitem{cao2013multi}
N.~Cao, K.~H. Low, and J.~M. Dolan, ``{Multi-Robot Informative Path Planning
  for Active Sensing of Environmental Phenomena: a Tale of Two Algorithms},''
  in \emph{Proceedings of the 2013 International Conference on Autonomous
  Agents and Multi-Agent Systems}, 2013, pp. 7--14.

\bibitem{ko1995exact}
C.-W. Ko, J.~Lee, and M.~Queyranne, ``{An Exact Algorithm for Maximum Entropy
  Sampling},'' \emph{Operations Research}, vol.~43, no.~4, pp. 684--691, 1995.

\bibitem{feng2021sensor}
S.~W. Feng, K.~Gao, J.~Gong, and J.~Yu, ``{Sensor Placement for Globally
  Optimal Coverage of 3D-Embedded Surfaces},'' in \emph{2021 IEEE International
  Conference on Robotics and Automation (ICRA)}.\hskip 1em plus 0.5em minus
  0.4em\relax IEEE, 2021, pp. 8600--8606.

\bibitem{binney2012branch}
J.~Binney and G.~S. Sukhatme, ``{Branch and Bound for Informative Path
  Planning},'' in \emph{2012 IEEE international conference on robotics and
  automation}.\hskip 1em plus 0.5em minus 0.4em\relax IEEE, 2012, pp.
  2147--2154.

\bibitem{dutta2022informative}
S.~Dutta, N.~Wilde, and S.~L. Smith, ``{Informative Path Planning in Random
  Fields via Mixed Integer Programming},'' \emph{Conference on Decision and
  Control (CDC)}, 2022, {To appear}.

\bibitem{chekuri2005recursive}
C.~Chekuri and M.~Pal, ``A recursive greedy algorithm for walks in directed
  graphs,'' in \emph{{46th Annual IEEE Symposium on Foundations of Computer
  Science (FOCS'05)}}, 2005, pp. 245--253.

\bibitem{binney2010informative}
J.~Binney, A.~Krause, and G.~S. Sukhatme, ``Informative path planning for an
  autonomous underwater vehicle,'' in \emph{2010 IEEE International Conference
  on Robotics and Automation}.\hskip 1em plus 0.5em minus 0.4em\relax IEEE,
  2010, pp. 4791--4796.

\bibitem{binney2013optimizing}
------, ``{Optimizing Waypoints for Monitoring Spatiotemporal Phenomena},''
  \emph{The International Journal of Robotics Research}, vol.~32, no.~8, pp.
  873--888, 2013.

\bibitem{dutta2022}
S.~Dutta, N.~Wilde, and S.~L. Smith, ``{An Improved Greedy Algorithm for Subset
  Selection in Linear Estimation},'' in \emph{2022 European Control Conference
  (ECC)}, 2022, pp. 1067--1072.

\bibitem{xiao2022nonmyopic}
C.~Xiao and J.~Wachs, ``{Nonmyopic Informative Path Planning Based on Global
  Kriging Variance Minimization},'' \emph{IEEE Robotics and Automation
  Letters}, vol.~7, no.~2, pp. 1768--1775, 2022.

\bibitem{hollinger2013sampling}
G.~A. Hollinger and G.~S. Sukhatme, ``{Sampling-based Motion Planning for
  Robotic Information Gathering.}'' in \emph{Robotics: Science and Systems},
  vol.~3, no.~5, 2013, pp. 1--8.

\bibitem{meera2019obstacle}
A.~A. Meera, M.~Popovi{\'c}, A.~Millane, and R.~Siegwart, ``{Obstacle-aware
  Adaptive Informative Path Planning for UAV-based Target Search},'' in
  \emph{2019 International Conference on Robotics and Automation (ICRA)}.\hskip
  1em plus 0.5em minus 0.4em\relax IEEE, 2019, pp. 718--724.

\bibitem{tokekar2016sensor}
P.~Tokekar, J.~Vander~Hook, D.~Mulla, and V.~Isler, ``{Sensor Planning for a
  Symbiotic UAV and UGV System for Precision Agriculture},'' \emph{IEEE
  Transactions on Robotics}, vol.~32, no.~6, pp. 1498--1511, 2016.

\bibitem{chen2022informative}
W.~Chen and L.~Liu, ``{Informative Planning in the Presence of Outliers},'' in
  \emph{2022 International Conference on Robotics and Automation (ICRA)}.\hskip
  1em plus 0.5em minus 0.4em\relax IEEE, 2022, pp. 5311--5318.

\bibitem{zhu2021online}
H.~Zhu, J.~J. Chung, N.~R. Lawrance, R.~Siegwart, and J.~Alonso-Mora, ``{Online
  Informative Path Planning for Active Information Gathering of a 3D
  Surface},'' in \emph{2021 IEEE International Conference on Robotics and
  Automation (ICRA)}.\hskip 1em plus 0.5em minus 0.4em\relax IEEE, 2021, pp.
  1488--1494.

\bibitem{popovic2017multiresolution}
M.~Popovi{\'c}, T.~Vidal-Calleja, G.~Hitz, I.~Sa, R.~Siegwart, and J.~Nieto,
  ``{Multiresolution Mapping and Informative Path Planning for UAV-based
  Terrain Monitoring},'' in \emph{2017 IEEE/RSJ International Conference on
  Intelligent Robots and Systems (IROS)}.\hskip 1em plus 0.5em minus
  0.4em\relax IEEE, 2017, pp. 1382--1388.

\bibitem{popovic2020informative}
M.~Popovi{\'c}, T.~Vidal-Calleja, J.~J. Chung, J.~Nieto, and R.~Siegwart,
  ``{Informative Path Planning for Active Field Mapping under Localization
  Uncertainty},'' in \emph{2020 IEEE International Conference on Robotics and
  Automation (ICRA)}.\hskip 1em plus 0.5em minus 0.4em\relax IEEE, 2020, pp.
  10\,751--10\,757.

\bibitem{krause2008robust}
A.~Krause, H.~B. McMahan, C.~Guestrin, and A.~Gupta, ``{Robust Submodular
  Observation Selection.}'' \emph{Journal of Machine Learning Research},
  vol.~9, no.~12, 2008.

\bibitem{fejes1942dichteste}
L.~Fejes, ``{\"U}ber die dichteste kugellagerung,'' \emph{Mathematische
  Zeitschrift}, vol.~48, no.~1, pp. 676--684, 1942.

\bibitem{chang2010simple}
H.-C. Chang and L.-C. Wang, ``{A Simple Proof of Thue's Theorem on Circle
  Packing},'' \emph{arXiv preprint arXiv:1009.4322}, 2010.

\bibitem{lecturenotes}
Y.~Wu, ``{Lecture Notes on ECE 598: Information-theoretic Methods in
  High-Dimensional Statistics},''
  \url{http://www.stat.yale.edu/~yw562/teaching/it-stats.pdf}, 2016.

\bibitem{korte2011combinatorial}
B.~H. Korte and J.~Vygen, \emph{Combinatorial optimization}.\hskip 1em plus
  0.5em minus 0.4em\relax Springer, 2011, vol.~1.

\end{thebibliography}
\end{document}